\newcommand*{\rom}[1]{\expandafter\@slowromancap\romannumeral #1@}
\renewcommand{\(}{\left(}
\renewcommand{\)}{\right)}
\renewcommand{\[}{\left[}
\renewcommand{\]}{\right]}
\renewcommand{\a}{\mathbf{a}}
\renewcommand{\b}{\mathbf{b}}
\newcommand{\thet}{\bm{\theta}}
\newcommand{\y}{\mathbf{y}}
\renewcommand{\H }{\mathbf{H}}
\newcommand{\z}{\mathbf{z}}
\newcommand{\w}{\mathbf{w}}
\newcommand{\p}{\mathbf{p}}
\newcommand{\x}{\mathbf{x}}
\renewcommand{\P}{\mathbf{P}}
\newcommand{\A}{\mathbf{A}}
\newcommand{\h}{\mathbf{h}}
\renewcommand{\u}{\mathbf{u}}
\newcommand{\Q}{\mathbf{Q}}
\newcommand{\q}{\mathbf{q}}
\renewcommand{\b}{\mathbf{b}}
\renewcommand{\log}[1]{{\rm{log}}#1}
\newtheorem{assumption}{Assumption}
\newtheorem{lemma}{Lemma}
\begin{document}

\title{Convex Nonparanormal Regression}

\author{Yonatan Woodbridge, Gal Elidan and Ami Wiesel
\thanks{The authors are with the Hebrew University of Jerusalem, Israel, and Google Research. This research was partially supported by the Israel Science Foundation (ISF) under Grant 1339/15}}

\maketitle

\begin{abstract}
    Quantifying uncertainty in predictions or, more generally, estimating the posterior conditional distribution, is a core challenge in machine learning and statistics. We introduce Convex Nonparanormal Regression (CNR), a conditional nonparanormal approach for coping with this task. CNR involves a convex optimization of a posterior defined via a rich dictionary of pre-defined non linear transformations on Gaussians. It can fit an arbitrary conditional distribution, including multimodal and non-symmetric posteriors. For the special but powerful case of a piecewise linear dictionary, we provide a closed form of the posterior mean which can be used for point-wise predictions. Finally, we demonstrate the advantages of CNR over classical competitors using synthetic and real world data. 
\end{abstract}

\begin{keywords}
Linear Regression, Nonparanormal Distribution, Convex Optimization.
\end{keywords}

\section{Introduction}
A fundamental task in data analysis is fitting a conditional
distribution to an unknown label given observed features.
Standard methods provide a point estimate \cite{wasserman2006all,wasserman2013all}, yet the holy grail
is a full characterization of the posterior distribution. Linear
regression (LR) is the simplest case with a closed form
solution that is optimal when the underlying posterior is a
Gaussian distribution with a fixed variance. More complex
models involve challenging optimizations that fit expressive
distributions. The main contribution of this letter is the introduction
of a general class of non-symmetric, heavy tailed
or multimodal conditional distributions that can be efficiently
fitted using convex optimization methods.

There is a large body of work on derivation of the full
posterior of unknown labels \cite{koenker1978regression,meinshausen2006quantile,goldberg1997regression}. In this context, LR can be
interpreted as the Maximum Likelihood (ML) estimate of
a fixed-variance Gaussian distribution whose mean depends
linearly on the features. An extension to LR, which we denote
by Gaussian Regression (GR), also assumes that the variance
depends on the features (based on \cite{bishop1994mixture}), and its ML estimator involves a
standard convex optimization. Other works allow a variance
that depends non-linearly on the features \cite{nix1995learning,heskes1997practical,papadopoulos2001confidence,meinshausen2006quantile}.
However, these all implicitly assume a unimodal symmetric
posterior, and cannot capture complex multimodal posteriors.

On the other extreme, recent trends in machine learning
motivate the characterization of distributions via non-linear
transformations on known latent variables. This is the main
building block of the nonparanormal distribution (NPN) \cite{liu2009nonparanormal},
which is based on transformation functions to the Gaussian.
Similarly, Gaussian copulas allow arbitrary marginals with
a Gaussian dependence structure, and belong to a richer
family of copula models \cite{nelsen2007introduction,klaassen1997efficient,davy2003copulas,iyengar2011parametric,woodbridge2017signal}. These models
led to impressive results in modeling high dimensional non-
Gaussian data, but do not consider distributions which are
conditioned on observed features. More recently, normalizing flow networks were introduced \cite{tabak2013family,kobyzev2019normalizing,rezende2015variational,tabak2010density,dinh2016density}, and
involve a deep composition of invertible maps through network
layers. These provide a rich class of transformations, but their
optimization is usually non-convex and poorly understood.

In this letter, we introduce the CNR, a conditional 
version of the NPN. CNR relies on a linear combination
of pre-defined basis functions that provides a general class
of non-Gaussian, and possibly multimodal posteriors. With
a rich enough dictionary, this approach can model arbitrary
conditional distributions. The price comes at an increased
number of unknown parameters and higher sample complexity.
By design, the likelihood function of CNR is convex and can be efficiently optimized using state of the art optimization
techniques, e.g., Alternating Directions Method of Multipliers.

On the practical side, we focus on a specific CNR implementation
based on a dictionary of piecewise linear transformation
functions, allowing us to derive a closed form solution to
the conditional CNR expectation used for prediction. The
dictionary partitions the label’s domain into predefined bins,
and uses a different transformation in each bin. By choosing
the number of bins, this allows for a flexible tradeoff
between complexity and expressive power. We demonstrate
the performance advantages of our CNR implementation over
LR and GR, using numerical experiments in both synthetic
and real world data. Given enough samples, CNR results in
better likelihood values than its competitors, and succeeds to fit multimodal posteriors. Given enough samples, CNR prediction performs similarly to both LR and GR, which are sufficient for point-wise estimation. 

\textbf{Letter organization}. We begin in Section II by
introducing the general CNR framework.  
%We derive the CNR model, detail its underlying assumptions, prove its convexity, and show that LR and GR are both special cases. 
In Section
III we present our specific CNR implementation based on a
dictionary of piecewise linear functions. Finally, in Section
IV we present the results of our numerical experiments.

\iffalse
\begin{figure}
  \centering
  \includegraphics[width=\linewidth]{}
\caption{
PDF of mixture of regressions, together with the estimated PDF of CNR and nonparametric Gaussian transformation (GT).
}
\label{fig:hist}
\end{figure}
\fi

\section{Convex Nonparanormal Regression}
In this section, we introduce Convex Nonparanormal Regression (CNR), a non-Gaussian yet convex generalization of linear regression. CNR learns the posterior of $y\in\mathbb{R}$ given a $k$-vector $\x\in\mathbb{R}^k$ and is parameterized by ${\bm{\theta}}\in\mathbb{R}^d$,
\begin{equation*}
    p_{\bm{\theta}}(y|\x),
\end{equation*}
CNR tries to estimate the unknown ${\bm{\theta}}$ given pairs of $(y,\x)$. It parameterizes the posterior using an expressive class of transformation functions to the Gaussian distribution. For this purpose, we define
\begin{equation*}
    g_{\bm{\theta}}(y;\x)
    \sim\mathcal{N}(0,1).
\end{equation*}
The transformation function $g$ operates on $y$ and is determined by the arguments $\thet$ and $\x$.
We assume that the pair $(\x,\thet)$ is restricted to the set $(\mathcal{X},\Theta_{\x})$, where $\Theta_{\x}$ depends on $\x$.
%$\thet\in\Theta_{\x}$ and $\x\in{\mathcal{X}}$.
We require the following assumptions on these sets:
%, where the subset $\Theta\subseteq\mathbb{R}^{d+k}$ is convex in $\thet$.
% In particular, we propose the following structure:
% \begin{equation}
%     g_{\A,\b}(y;\x)
%     = \h^T(y)[\A{\bm{\psi}}(\x)+\b]
% \end{equation}
% where ${\bm{\theta}}=\{\A,\b\}$ are the unknown parameters, ${\bm{\psi}}(\x)$ are pre-defined feature mappings (possibly just $\x$ itself), and $\h\(\y\)$ is a dictionary of pre-defined basis functions:
% \begin{equation}
%     \h^T\(y\)=[h_0(y),\cdots,h_L(y)]
% \end{equation}
% Depending on the the choice of features and dictionary, this model is quite general. 
%Moreover, to ensure tractability, 
\begin{assumption}
The transformation function $g_{\thet}(y;\x)$ is continuous, piecewise differentiable and monotonically increasing with respect to $y$ for any $\x\in{\mathcal{X}}$ and $\thet\in\Theta_{\x}$.
%Moreover, the derivative $\frac{\partial g_{\thet}(y;\x)}{\partial y}$ is differentiable with respect to $\thet$. 
\end{assumption}
\begin{assumption}
The set $\Theta_{\x}$ is convex in $\thet$ for all $\x\in{\mathcal{X}}$. Within this set, the transformation $g$ is affine in $\thet$.
%Moreover, the derivative $\frac{\partial g_{\thet}(y;\x)}{\partial y}$ is differentiable with respect to $\thet$. 
\end{assumption}
%Assumption 1 can be relaxed to almost everywhere differentiable 
%$g_{\thet}(y;\x)$, which will result in an almost everywhere continuous probability density function (PDF).
We require a monotonically increasing $g_{\thet}(y;\x)$ in order to ensure its invertibility.
We emphasize that these assumptions allow $g$ to be highly non-linear. As we shall see, by choosing an expressive class of $g$, the CNR can approximate arbitrary non-Gaussian posteriors.

%The concrete constructions of $g(\cdot)$ allow this function to be highly non-linear and non-convex in $\x$ and $y$. In fact, by choosing an expressive enough transformation class, $g$ can approximate arbitrary continuous posterior. The assumptions are simply designed to ensure a tractable likelihood function.

\begin{lemma}[Convexity]
Suppose that Assumption 1 holds. For any point $y$ which is differentiable in $g_{\thet}(y;\x)$, the negative log likelihood parameterized by $\thet$
(ignoring all constants) is:
\begin{eqnarray*}
-\log\;p_{\bm{\theta}}(y|\x) = g_{\thet}^2(y;\x)-2\log\(
g'_{\bm{\theta}}(y;\x)
\)
\end{eqnarray*}
where the derivative is defined as
\begin{equation*}
    g'_{\bm{\theta}}(y;\x)
    = \frac{\partial g_{\bm{\theta}}(y;\x)}{\partial y}
\end{equation*}
Further, together with assumption 2, this objective is convex in $\thet\in \Theta_{\x}$ for all $\x\in{\mathcal{X}}$.
\end{lemma}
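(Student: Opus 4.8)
The plan is to treat the two claims separately: first obtain the stated form of the negative log likelihood via a change of variables, then establish convexity using the standard composition rules of convex analysis.

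For the first part, I would start from the defining relation $g_{\thet}(y;\x)\sim\mathcal{N}(0,1)$. Since Assumption 1 guarantees that $g_{\thet}(\cdot\,;\x)$ is continuous, monotonically increasing, and piecewise differentiable, it is invertible, and at any point $y$ where it is differentiable the standard change-of-variables formula for densities gives
\begin{equation*}
p_{\thet}(y|\x)=\phi\big(g_{\thet}(y;\x)\big)\,g'_{\thet}(y;\x),
\end{equation*}
where $\phi$ is the standard normal density and the Jacobian $g'_{\thet}(y;\x)$ appears without an absolute value because monotonicity forces $g'_{\thet}\ge 0$. Substituting $\phi(z)=(2\pi)^{-1/2}e^{-z^2/2}$, taking the negative logarithm, discarding the additive constant $\tfrac12\log(2\pi)$, and rescaling by the positive factor $2$ (which leaves both minimizers and convexity unchanged) yields the claimed expression $g_{\thet}^2(y;\x)-2\log\big(g'_{\thet}(y;\x)\big)$.

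For the convexity claim I would fix $y$ and $\x$ and regard the objective as a function of $\thet$ alone on the convex set $\Theta_{\x}$. By Assumption 2, $\thet\mapsto g_{\thet}(y;\x)$ is affine, so the first term $g_{\thet}^2(y;\x)$ is the composition of the convex scalar map $t\mapsto t^2$ with an affine map, hence convex. The key observation for the second term is that the $y$-derivative is a linear operator acting independently of $\thet$, so differentiating an affine-in-$\thet$ function preserves the affine structure: $\thet\mapsto g'_{\thet}(y;\x)$ is again affine. Since $-\log$ is convex on $(0,\infty)$ and $g'_{\thet}>0$ at the differentiable points under consideration, $-2\log\big(g'_{\thet}(y;\x)\big)$ is likewise convex-composed-with-affine. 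The objective is therefore a sum of two convex functions on the convex domain $\Theta_{\x}$, and convexity follows.

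The calculations are essentially routine; the one step that requires care is the claim that $g'_{\thet}$ inherits affineness in $\thet$, which I regard as the crux. It hinges on the fact that partial differentiation with respect to $y$ commutes with the affine parameterization in $\thet$: if $g_{\thet}(y;\x)=a(y,\x)+\langle b(y,\x),\thet\rangle$, then $g'_{\thet}(y;\x)=\partial_y a(y,\x)+\langle \partial_y b(y,\x),\thet\rangle$. I would also note the handling of piecewise differentiability: convexity is asserted pointwise in $y$ at differentiable points, so no difficulty arises from the finitely many breakpoints, and adopting the extended-value convention $-\log(t)=+\infty$ for $t\le 0$ keeps the composition convex even where $g'_{\thet}$ would fail to be strictly positive.
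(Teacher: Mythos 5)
Your proposal is correct and follows essentially the same route as the paper: the change-of-variables formula for the density of a monotone transformation (with the absolute value dropped by monotonicity), followed by the observation that $g$ and hence $g'$ are affine in $\thet$, so the objective is a sum of a convex quadratic and a negative logarithm each pre-composed with an affine map. Your explicit remarks about the factor-of-2 rescaling and the commutation of $\partial_y$ with the affine parameterization are slightly more careful than the paper's proof but do not constitute a different argument.
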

Note that the likelihood function is defined almost everywhere, since $g_{\thet}(y;\x)$ is piecewise differentiable by assumption 1.
%\iffalse
\begin{proof}
Following assumption 1, we use the classical formula of transformed random variables  (\cite{ross2014introduction}, section 2.5.4). Given the standard normal probability density function (PDF) $\phi(\cdot)$, the PDF of the original data is given as $p_{\thet}(y|\x)=|g'_{\bm{\theta}}(y;\x)|\phi(g_{\thet}(y;\x))$. Since $g'_{\thet}(y;\x)$ must be positive, we obtain the PDF of the data:   $p_{\bm{\theta}}(y|\x)=\frac{g'_{\bm{\theta}}(y;\x)}{\sqrt{2\pi}}e^{-\frac{1}{2}g_{\bm{\theta}}^2(y;\x)}
$, whose negative log is given in lemma 1.
%\begin{equation}\label{nll}
%    p_{\bm{\theta}}(y|\x)=\frac{g'_{\bm{\theta}}(y;\x)}{\sqrt{2\pi}}e^{-\frac{1}{2}g_{\bm{\theta}}^2(y;\x)}
%\end{equation}
%This is the probability density function of the original data, which yields the likelihood.
Convexity is guaranteed by noting that the quadratic and negative logarithm are convex function. Next, $g$ is affine in $\thet$, and consequently $g'$ is affine in $\thet$ too. Finally, convexity is preserved under affine transformations. 
\end{proof}
%\fi

Given a training dataset $\{(\x_i,y_i)\}_{i=1,..,n}$, CNR is defined as the maximum likelihood estimate of $\thet$: 
\begin{eqnarray}
\label{CNR_ml}
\begin{array}{ll}
   \min_{\thet}  &  \sum_i g_{\thet}^2(y_i;\x_i)-2\log\(
g'_{\bm{\theta}}(y_i;\x_i)
\)\\
    {\rm{s.t.}} & \thet \in \Theta(\x_i) \; \quad i=1,\cdots,n 
\end{array}
\end{eqnarray}
Convexity ensures that this minimization can be efficiently solved using existing toolboxes. This property allows us to add convex penalties and/or constraints. We can use this to regularize the objective when the number of samples is insufficiently large compared to the dimension of ${\bm{\theta}}$.  

%The advantage of the above definitions is summarized in the next lemma.
% \begin{lemma}
% Under Assumption 1, the problem of minimizing the negative log likelihood $\log\;p_{\bm{\theta}}(y|\x)$ of $y$ given $\x$ in (\ref{nll}) is convex in ${\bm{\theta}}$.
% \end{lemma}
% \begin{proof}
% Without constants, the negative log likelihood is given as $g_{\thet}^2(y;\x)-2\text{log}\(
% g'_{\bm{\theta}}(y;\x)
% \)$. The negative log and quadratic terms are convex in $g_{\thet}(y;\x)$ and $g'_{\thet}(y;\x)$, respectively. Assumption 1 implies that both $g_{\thet}(y;
% x)$ and $g'_{\thet}(y;
% x)$ are affine in $\thet$.  Overall, the composition yields the above convex negative log likelihood in $\thet$. Since $\Theta$ is convex in $\thet$, then the constraint region is also convex.
% \end{proof}Consequently, CNR can be efficiently computed by solving a convex optimization problem. 

As discussed, CNR generalizes two well known cases:
\begin{lemma}[Generalization of LR]
If $g$ is jointly linear in $\x$ and $y$ then CNR reduces to standard linear regression of $y$ given $\x$:
\begin{eqnarray*}
y|\x\sim \mathcal{N}\left(\w_{\thet}^T \x,\sigma^2_{\thet}\right)
\end{eqnarray*}
\end{lemma}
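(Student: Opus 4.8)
The plan is to make the affine structure of $g$ explicit and then recognize a Gaussian. Since $g_{\thet}$ is jointly linear (affine) in $(\x,y)$, I would write $g_{\thet}(y;\x) = a_{\thet}\,y + \b_{\thet}^{T}\x + c_{\thet}$ for a scalar $a_{\thet}$, a vector $\b_{\thet}$ and a scalar $c_{\thet}$, each depending affinely on $\thet$. Assumption 1 (monotone increasing in $y$) forces $a_{\thet}>0$, which is precisely what guarantees a well-defined, positive variance in the end.

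The key step is to invert the defining relation $g_{\thet}(y;\x)\sim\mathcal{N}(0,1)$. Because $g$ is affine and strictly increasing in $y$, it is invertible with $y = a_{\thet}^{-1}\!\left(Z - \b_{\thet}^{T}\x - c_{\thet}\right)$ for $Z\sim\mathcal{N}(0,1)$; since Gaussians are closed under affine maps, $y\mid\x$ is Gaussian and I only need its first two moments. As a cross-check I would also derive this from Lemma 1: substituting $g'_{\thet}(y;\x)=a_{\thet}$ (constant in $y$) into the density $p_{\thet}(y|\x)=\tfrac{g'_{\thet}(y;\x)}{\sqrt{2\pi}}\,e^{-\frac{1}{2}g_{\thet}^2(y;\x)}$ from its proof and completing the square in $y$ exhibits the Gaussian form directly.

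Either route yields mean $-a_{\thet}^{-1}(\b_{\thet}^{T}\x+c_{\thet})$ and variance $a_{\thet}^{-2}$, so I would set $\sigma^2_{\thet}=a_{\thet}^{-2}$ and define $\w_{\thet}$ so that $\w_{\thet}^{T}\x$ reproduces the affine mean, folding the constant $c_{\thet}$ into the usual intercept (either by augmenting $\x$ with a unit feature, or by noting that an affine mean is exactly standard linear regression with a bias term). The computation is routine; the only points demanding care — and the closest thing to an obstacle — are the bookkeeping of the intercept $c_{\thet}$ into $\w_{\thet}$ and the explicit use of Assumption 1 to keep $a_{\thet}>0$, so that the inversion is valid and $\sigma^2_{\thet}>0$, matching the stated form $y\mid\x\sim\mathcal{N}(\w_{\thet}^{T}\x,\sigma^2_{\thet})$.
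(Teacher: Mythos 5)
Your proposal is correct and follows essentially the same route as the paper: the paper writes $g_{\thet}(y;\x)=\u_{\thet}^T\x+v_{\thet}y$ and reads off $\w_{\thet}=-\u_{\thet}/v_{\thet}$, $\sigma^2_{\thet}=1/v^2_{\thet}$, exactly your inversion argument with the intercept absorbed into the linear term. Your added care about $a_{\thet}>0$ via Assumption 1 and the density cross-check are sound elaborations of the same one-line idea.
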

\begin{proof}
This follows immediately by defining $g_{\thet}(y;\x)=\u_{\thet}^T\x+v_{\thet}y$, $\w_{\thet}=\frac{-\u_{\thet}}{v_{\thet}}$ and $\sigma^2_{\thet}=\frac{1}{v^2_{\thet}}$.
% \begin{align}
% &g_{\thet}(y;\x)=\u_{\thet}^T\x+v_{\thet}y\sim \mathcal{N}(0,1),\nonumber\\
% &\w_{\thet}=\frac{-\u_{\thet}}{v_{\thet}},\qquad
% \sigma^2_{\thet}=\frac{1}{v^2_{\thet}}.
% \end{align}

% Denoting $g_{\thet}(y;\x)=\u_{\thet}^T \x + v_{\thet}y\sim \mathcal{N}(0,1)$ we get 
% \begin{eqnarray}
% y\sim \mathcal{N}\left(\frac{-\u_{\thet}^T\x}{v_{\thet}},\frac{1}{v_{\thet}^2}\right)
% \end{eqnarray}
% for some constants $\u_{\thet},v_{\thet}$. Defining $\w_{\thet}=-\u_{\thet}/v_{\thet}$ and $\sigma_{\thet}^2= 1/ v_{\thet}^2$ we get the classical linear Gaussian model with constant variance.
%Thus, $g_{\thet}(y;\x)$ is monotonically increasing for $v>0$ and any $\x\in\mathbb{R}^k$ (assumption 1). Assumption 2 holds in any case. 
%We then arrive at a linear regression model of $y$ w.r.t. $\x$ where $z$ is the Gaussian noise. 
\end{proof}
\begin{lemma}[Generalization of GR]
If $g$ is affine in $y$ then CNR reduces to maximum likelihood estimation of a Gaussian distribution in its canonical form:
\begin{eqnarray*}
%\label{gr_model}
y\sim \mathcal{N}\left(w_{\thet}(\x),\sigma^2_{\thet}(\x)\right)
\end{eqnarray*}
%, i.e. it holds that $\alpha(\x)+\sigma^2(\x)y\sim\mathcal{N}(0,1)$ for some functions $\alpha(\x),\sigma^2(\x)$. 
\end{lemma}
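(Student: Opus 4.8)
The plan is to reuse the substitution argument from the proof of the preceding LR lemma (Lemma 2), but without the extra requirement that $g$ be linear in $\x$. Since $g$ is assumed affine in $y$, I would write
\begin{equation*}
g_{\thet}(y;\x) = a_{\thet}(\x) + b_{\thet}(\x)\,y,
\end{equation*}
where $a_{\thet}(\x)$ and $b_{\thet}(\x)$ are scalar functions that are now allowed to depend arbitrarily — in particular nonlinearly — on $\x$ (and on $\thet$). By Assumption 1 the map is monotonically increasing in $y$, which forces $b_{\thet}(\x)>0$; this simultaneously guarantees invertibility and makes the forthcoming logarithm well defined.

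I see two equivalent ways to finish, and I would lead with the quicker one. Substituting $g'_{\thet}(y;\x)=b_{\thet}(\x)$ into the negative log likelihood of Lemma 1 (Convexity) yields $\(a_{\thet}(\x)+b_{\thet}(\x)y\)^2 - 2\log\(b_{\thet}(\x)\)$, a quadratic in $y$ together with the $-2\log\(b_{\thet}(\x)\)$ normalizer; this is, up to additive constants, twice the negative log-density of a Gaussian written in canonical form, the coefficients of $y^2$ and $y$ playing the role of the natural parameters. Equivalently, since $g_{\thet}(y;\x)\sim\mathcal{N}(0,1)$ by definition and $y=(g_{\thet}(y;\x)-a_{\thet}(\x))/b_{\thet}(\x)$ is an affine image of a standard normal variable, $y$ is Gaussian, and reading off its first two moments gives
\begin{equation*}
w_{\thet}(\x) = -\frac{a_{\thet}(\x)}{b_{\thet}(\x)}, \qquad \sigma^2_{\thet}(\x) = \frac{1}{b_{\thet}^2(\x)}.
\end{equation*}

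I do not anticipate a genuine obstacle: the argument is simply the feature-dependent-coefficient version of the LR reduction. The one point worth stating explicitly — and the only substantive difference from Lemma 2 — is that $a_{\thet}(\x)$ and $b_{\thet}(\x)$ need not be affine in $\x$, so both the conditional mean $w_{\thet}(\x)$ and, crucially, the conditional variance $\sigma^2_{\thet}(\x)$ may be arbitrary functions of the features. This feature-dependent variance is exactly the heteroscedasticity that separates GR from ordinary LR, and confirming that CNR can realize it is the real content of the lemma.
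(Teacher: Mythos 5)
Your proposal is correct and matches the paper's own proof, which likewise defines $g_{\thet}(y;\x)=u_{\thet}(\x)+v_{\thet}(\x)y$ and reads off $w_{\thet}(\x)=-u_{\thet}(\x)/v_{\thet}(\x)$ and $\sigma^2_{\thet}(\x)=1/v^2_{\thet}(\x)$. Your added remarks on the positivity of the slope coefficient and on feature-dependent variance are sound elaborations of the same one-line substitution argument.
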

\begin{proof}
This follows immediately by defining $g_{\thet}(y;\x)=u_{\thet}(\x)+v_{\thet}(\x)y$, $w_{\thet}(\x)=\frac{-u_{\thet}(\x)}{v_{\thet}(\x)}$ and $\sigma^2_{\thet}(\x)=\frac{1}{v^2_{\thet}(\x)}$.
% \begin{align}
% \label{gr_model1}
% &g_{\thet}(y;\x)=u_{\thet}(\x)+v_{\thet}(\x)y\sim \mathcal{N}(0,1),\nonumber\\
% &w_{\thet}(\x)=\frac{-u_{\thet}(\x)}{v_{\thet}(\x)},\qquad
% \sigma^2_{\thet}(\x)=\frac{1}{v^2_{\thet}(\x)}.
% \end{align}
%from the lemma's assumption. Since $g$ is linear in $y$, it can be written as $\alpha(\x)+\sigma^2(\x)y$, which has a standard normal distribution. 
\end{proof}

Thus, with proper constraints, LR and GR can both be implemented as special cases of CNR.

CNR involves a convex yet constrained optimization. The optimization is performed on a training dataset and the constraints are enforced on these samples. At test time, the resulting posterior distribution is only valid if the constraints are satisfied. When the number of training samples is sufficiently large, the constraints are typically satisfied on most test samples. Otherwise, we recommend to fall back to a simpler and valid estimator as LR in invalid test points. We emphasize that it is easy to identify these problematic samples as the constraints only depend on the features, and do not involve the unknown labels. 

\section{Piecewise Linear CNR}
In this section, we provide a concrete construction of CNR. %Generally, CNR only requires Assumptions 1-2 to ensure a well defined convex negative log likelihood. To make the construction concrete, we now provide
We introduce a specific transformation class that satisfies assumptions 1-2 and offers a flexible tradeoff between complexity and expressiveness. It also gives rise to a closed form solution of the posterior mean, and can be efficiently computed using a scalable ADMM algorithm.

The class is based on a dictionary of piecewise linear, monotonic non-decreasing functions of $y$ with weights that are affine functions of the features. For a pair of observation $(\x,y)$ we therefore have:
\begin{equation}
    g_{\bm{\thet}}(y;\x)
    = \h^T(y)\u(\x), \quad 
    \u(\x)
    = \A\boldsymbol{\psi}(\x)+\b
\label{g_ab}
\end{equation}
%\begin{equation}
%    \u(\x)
%    = \A\boldsymbol{\psi}(\x)+\b
%\label{u_x}
%\end{equation}
% \begin{equation}
% \label{g_ab}
%     g_{\bm{\thet}}(y_i;\x_i)
%     = \h^T(y_i)[\A\boldsymbol{\psi}(\x_i)+\b].
% \end{equation}
Here, $\h\(y\)$ is the dictionary defined as:
\begin{equation}
\label{h_}
    \h^T\(y\)=[1,h_0(y),...h_{L+1}(y)],
\end{equation}
where each of the basis functions $h_i$ is monotonically non-decreasing in $y$. Moreover, $\boldsymbol{\psi}(\x)\in\mathbb{R}^k$ is a pre-defined feature mappings (which can also be the identity mapping, i.e equals to $\x$ itself), and
${\bm{\theta}}=\{\A\in\mathbb{R}^{(L+3)\times k},\b\in\mathbb{R}^{L+3}\}$ is the set of unknown parameters.

We construct our dictionary using simple step basis functions, which are differentiable except from one or two points. More specifically, let $I=\[p_0,...,p_L\]$ be evenly spaced points in the real line, such that
\begin{equation*}
\big{\{}\delta_{j+1}=p_{j+1}-p_{j}\big{\}}_{j=0,...,L-1}
\end{equation*}
are the distances between every two adjacent points. Note that the index of $\delta_j$ run from $1$ to $L$. We then define: 
\begin{equation*}
h_{j}(y)=
  \left\{
\begin{array}{ll}
     0 & y < p_{j-1}\\
      y- p_{j-1} & p_{j-1} \leq y < p_{j} \\
      \delta_{j}   & p_{j}  \leq y \\
\end{array} 
\right.
\quad \forall j\in\{1,...,L\}
%\label{gi}
\end{equation*}
Below and above $p_0$ and $p_L$ we define, respectively:
\begin{equation*}
    \begin{split}
    h_0(y)=
    \Big{\{}
    \begin{array}{ll}
     y-p_0 & y < p_0\\
     0   & p_{0}\leq y \\
\end{array}
,
\\
h_{L+1}(y)=
    \Big{\{}
    \begin{array}{ll}
     0 & y < p_L\\
     y-p_L   & p_{L}\leq y \\
\end{array}
.
\end{split}
%\label{g0L}
\end{equation*}

\iffalse
Figure \ref{fig:basis} illustrates these functions.
\begin{figure}
    \hspace{-0.1in} 
    \centering
\includegraphics[width=0.48\textwidth]{} 
    \caption{Examples of basis functions, with $h_0(y)$ (left), $h_j(y)$ for $1\leq j \leq L$ (center) and  $h_{L+1}(y)$ (right).}
    \label{fig:basis}
\end{figure}
\fi

Finally, to ensure a monotonic increasing $g$, we define the domain $\Theta_{\x}$:
\begin{equation*}
%\label{CNRconstrains}
    \Theta_{\x} = 
    \big{\{}
    \(\A,\boldsymbol{\b}\):
    \forall j\geq 2, \:
    \[\A\boldsymbol{\psi}(\x)+\boldsymbol{\b}\]_{j}>0
    \big{\}}.
    \label{Thetx}
\end{equation*}
It is easy to see that this construction satisfies assumptions 1-2 and provides a convex CNR. 
Reminiscent of kernel density estimation \cite{wasserman2006all}, the dictionary divides the domain of $y_i$ into bins $L+2$ bins through the set $I$, each with a different transformation. With a sufficient number of bins, any monotonic increasing transformation can be modeled.

\textbf{The Posterior Mean}.
As discussed, with CNR we can characterize arbitrary conditional distributions. In some applications, however, we are less interested in the full shape of the posterior and simply require point-wise prediction of unknown labels. 
In particular, we are interested in computing the posterior mean, which is closely related to the Minimum Mean Squared Error (MMSE) \cite{kay1993fundamentals}. 
$\min_{\hat y}\mathbb{E}[(\hat{y}-y)^2]\quad \Rightarrow \quad \hat y_{\rm{mean}}=\mathbb{E}(y|\x)$.
%\begin{equation*}
%\min_{\hat y}\mathbb{E}[(\hat{y}-y)^2]\quad \Rightarrow \quad \hat y_{\rm{mean}}=\mathbb{E}(y|\x)
%\end{equation*}

It is possible to compute the posterior mean in closed form for the piecewise linear CNR, given by the next lemma
(see proof in supplementary material)
:

\begin{lemma}[Posterior mean]
\label{posteriorMean}
The conditional mean of $y$ given $\x$ in the piecewise linear CNR model is 
\begin{multline}
\label{CNRE}
 \hat y_{\rm{mean}}=
     -\frac{e^{-\frac{1}{2}\mu^2}}{\sqrt{2\pi}\alpha_0}
    +\(p_0-\frac{\mu}{\alpha_0}\) \Phi(\mu)+\\
    \sum\limits_{j=0}^{L-1}
    \Bigg{(}
    \frac{e^{-\frac{1}{2}\(\mu+\Delta_j\)^2}
    -e^{-\frac{1}{2}\(\mu+\Delta_{j+1}\)^2}}{\sqrt{2\pi}\alpha_{j+1}}
    \\
    +\(p_j-\frac{\mu+\Delta_j}{\alpha_{j+1}}\) \(\Phi(\mu+\Delta_{j+1})-\Phi(\mu+\Delta_{j})\)
    \Bigg{)}+
    \\
    \frac{e^{-\frac{1}{2}\(\mu+\Delta_L\)^2}}{\sqrt{2\pi}\alpha_{L+1}}
    +\(p_L-\frac{\mu+\Delta_L}{\alpha_{L+1}}\) \(1-\Phi(\mu+\Delta_{L})\)
    ,
\end{multline}
where $\Phi$ is the standard normal cumulative distribution function (CDF) and we define
\begin{align}
    [\mu,\alpha_0,...,\alpha_{L+1} ]^T= \u(\x)=\A
    \boldsymbol{\psi(\x)}
    + \boldsymbol{b} \nonumber\\
    \Delta_0 = 0
    \label{eq:lemma5}, \quad
    \Delta_j  = \delta_1 \alpha_1+...+\delta_j \alpha_j
\quad \forall j=1,...,L. \nonumber
\end{align}
\end{lemma}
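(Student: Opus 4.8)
The plan is to exploit the defining property that $z:=g_{\thet}(y;\x)$ is standard normal, so that by a simple change of variables the posterior mean becomes $\hat y_{\rm{mean}}=\mathbb{E}(y\mid\x)=\mathbb{E}\!\left[\ginvt(z;\x)\right]$ with $z\sim\mathcal{N}(0,1)$. Since $g_{\thet}(\cdot\,;\x)$ is monotone increasing and piecewise linear in $y$ by construction, its inverse $\ginvt(\cdot\,;\x)$ is monotone increasing and piecewise linear in $z$, and the expectation collapses to a finite sum of integrals of affine functions against the Gaussian density $\phi$.

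First I would make $g$ explicit on each bin. Plugging the basis functions of \eqref{h_} into \eqref{g_ab} and writing $\u(\x)=[\mu,\alpha_0,\dots,\alpha_{L+1}]^T$, one checks that for $y<p_0$ only $h_0$ is active, giving $g_{\thet}(y;\x)=\mu+\alpha_0(y-p_0)$; on each bin $p_j\le y<p_{j+1}$ (for $j=0,\dots,L-1$) the saturated terms $h_1,\dots,h_j$ contribute the cumulative offset $\Delta_j=\sum_{i\le j}\delta_i\alpha_i$ while the single active term $h_{j+1}$ contributes $\alpha_{j+1}(y-p_j)$, giving $g_{\thet}(y;\x)=\mu+\Delta_j+\alpha_{j+1}(y-p_j)$; and for $y\ge p_L$ one similarly obtains $g_{\thet}(y;\x)=\mu+\Delta_L+\alpha_{L+1}(y-p_L)$. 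In particular $g$ sends the breakpoints $p_0,\dots,p_L$ to the values $\mu,\mu+\Delta_1,\dots,\mu+\Delta_L$, which fixes the $z$-intervals on which each linear inverse branch applies. Inverting each branch yields $\ginvt(z;\x)=p_j+(z-\mu-\Delta_j)/\alpha_{j+1}$ on the corresponding interval, together with the analogous expressions $p_0+(z-\mu)/\alpha_0$ and $p_L+(z-\mu-\Delta_L)/\alpha_{L+1}$ on the two tails.

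Next I would integrate. Writing $\mathbb{E}[\ginvt(z;\x)]$ as the sum of integrals over $(-\infty,\mu)$, over $[\mu+\Delta_j,\mu+\Delta_{j+1})$ for $j=0,\dots,L-1$, and over $[\mu+\Delta_L,\infty)$, each summand is an affine function of $z$ integrated against $\phi(z)$. These are evaluated using only the two elementary identities $\int_a^b\phi(z)\,dz=\Phi(b)-\Phi(a)$ and $\int_a^b z\,\phi(z)\,dz=\phi(a)-\phi(b)$, where the latter follows from $\phi'(z)=-z\phi(z)$. Substituting the endpoints, the constant parts produce the $\left(p_j-(\mu+\Delta_j)/\alpha_{j+1}\right)\left(\Phi(\mu+\Delta_{j+1})-\Phi(\mu+\Delta_j)\right)$ contributions and the $z$-terms produce the Gaussian-exponential contributions of \eqref{CNRE}, while the two tail integrals yield the leading and trailing terms.

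The computation itself is routine once the geometry is in place, so the only real obstacle is the bookkeeping in the second step: correctly identifying which basis functions are active on each bin, verifying that the accumulated offsets telescope into exactly the $\Delta_j$ defined in the lemma, and matching each half-open interval in $y$ to its image interval in $z$ under the monotone map. Some care is also needed at the two unbounded tails, where one checks that the boundary terms $\phi(\pm\infty)=0$ vanish, so that only $\Phi(\mu)$ and $1-\Phi(\mu+\Delta_L)$ survive.
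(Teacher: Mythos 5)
Your proposal is correct and follows essentially the same route as the paper's proof: both decompose the expectation over the bins defined by $I$, identify the active basis functions and accumulated offsets $\mu+\Delta_j+\alpha_{j+1}(y-p_j)$ on each bin, and reduce each piece to elementary Gaussian integrals. The only cosmetic difference is that you integrate $\ginvt(z;\x)\phi(z)$ over the image intervals in the $z$-domain, whereas the paper writes out the per-segment density and integrates $y\,p_{\thet}(y|\x)$ in the $y$-domain; the two computations coincide under the substitution $z=g_{\thet}(y;\x)$.
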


\textbf{ADMM implementation}.
One of the main advantages of CNR models is their reliance on a tractable convex optimization problem. 
In fact,
our likelihood objective can be reparameterized 
and then minimized using the ADMM machinery \cite{boyd2011distributed}, which is used in large scale problems.
Our goal is to minimize the NLL, defined by  Equations \eqref{CNR_ml}-\eqref{g_ab}, over the parameters $\A,\b$ . Thus, in order to develop the ADMM, we first need to write the objective as a function of $\A,\b$ given as a vector. Stacking all entries of $\A$ into one vector, we define:
$\a = 
    \[A_{1,1},...,A_{1,k},...,
    A_{L+3,1},...,A_{L+3,k}
    \]$.
%\begin{eqnarray*}
%    \a
%    & = &
%    \[A_{1,1},...,A_{1,k},...,
%    A_{L+3,1},...,A_{L+3,k}
%    \].
%\end{eqnarray*}
Together with $\b$ the vector of unknown parameters is $[\a,\b]$. It can be easily verified that the transformation \eqref{g_ab} and its derivative are: 
\begin{align*}
\nonumber
    g_{\thet}(y_i;\x_i)=
    \[ \h^T(y_i)\otimes 
    \boldsymbol{\psi}(\x_i)^T \quad 
    \h^T(y_i)
    \] 
    \begin{bsmallmatrix}
    \a \\ \b
    \end{bsmallmatrix}
    =\p_i^T
    \begin{bsmallmatrix}
    \a \\ \b
    \end{bsmallmatrix}\\
g'_{\thet}(y_i;\x_i)=
    \[ {\h'}^T(y_i)\otimes 
    \boldsymbol{\psi}(\x_i)^T \quad 
    {\h'}^T(y_i)
    \] 
    \begin{bsmallmatrix}
    \a \\ \b
    \end{bsmallmatrix}
    ={\p'}_i^T
    \begin{bsmallmatrix}
    \a \\ \b
    \end{bsmallmatrix},
\end{align*}
where
$\otimes$ is the Kronecker product,
and ${\h'}^T(y_i)$ is the derivative of ${\h}^T(y_i)$ over $y_i$. 
We can now define: 
\begin{equation}
    \P = \sum_{i=1}^n \p_i \p_i^T , \quad
    \P'= \begin{bsmallmatrix}
    \hdots & {\p'}^T_1 & \hdots \\ 
     &  \vdots  &  \\
     \hdots & {\p'}^T_n & \hdots
    \end{bsmallmatrix}
    \label{PdP}
\end{equation}
Altogether, the objective (\ref{CNR_ml}) boils down to:
\begin{equation*}
    \min_{\a,\b} \[\a^T,\b^T\] \P 
    \begin{bsmallmatrix}
    \a \\ \b
    \end{bsmallmatrix}
    -2\sum_{i=1}^n \text{log}\([\P' \begin{bsmallmatrix}
    \a \\ \b
    \end{bsmallmatrix}]_i\).
\end{equation*}
ADMM solves this minimization problem
by replacing each $i$-th scalar in the logarithm by $z_{i}$, using the constraint $z_{i}>0$ (to ensure a monotonically increasing $g$) through the augmented Lagrangian \cite{boyd2011distributed}.
It then alternatingly solves for $\[\a,\b\]$, $z_{i}$ and $y_{i}$. A pseudocode for the ADMM is given in 
Algorithm 1 with penalty parameter $\rho$. See
supplementary material for a detailed derivation of the ADMM. While LR solution is given in closed form, the ADMM includes additional $n$ calculations of $z_i$. Moreover, ADMM is iterative and it requires a large number of iterations (as ADMM usually converges slowly \cite{boyd2011distributed}).  

\begin{algorithm}[]
Input: $\rho>0$, $\{\x_i,y_i\}_{i=1}^n$, dictionary $\h(\cdot)$\\
Initialize $\z\in \mathbb{R}^n$\\
Compute $\P,\P'$ \eqref{PdP}, using $\{\x_i,y_i\}_{i=1}^n$ and $\h(\cdot)$  \\
\textbf{Repeat until convergence}
\begin{itemize}
\item $\w\leftarrow -\frac{1}{2}
\(\P+\frac{\rho}{2}{\P'}^T {\P'} 
\)^{-1}
    {\P'}^T 
    \(
    \y-\rho \z
    \)$
\item $z_{i}\leftarrow
\frac{\rho [{\P'} \w]_i+y_{i} 
    +
    \sqrt{\(\rho [{\P'} \w]_i+y_{i}\)^2+8\rho}
    }
    {2\rho} \quad\forall i=1,..,n$.
\item $\y \leftarrow \y+\rho\(
{\P'}\w-\z
\)$
\end{itemize}
\textbf{Return} $ \begin{bsmallmatrix} 
\a \\ \b \end{bsmallmatrix} \leftarrow \w$.
 \caption{\tt{ADMM}}
 \label{admm}
\end{algorithm}

\section{Experimental Evaluation}

\begin{figure*}[!htb]
\minipage{0.32\textwidth}
\caption*{LR}
  \includegraphics[width=\linewidth]{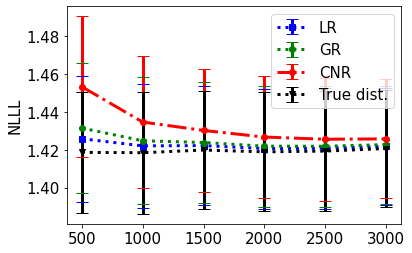}
\endminipage\hfill
\minipage{0.32\textwidth}
\caption*{CNR}
  \includegraphics[width=\linewidth]{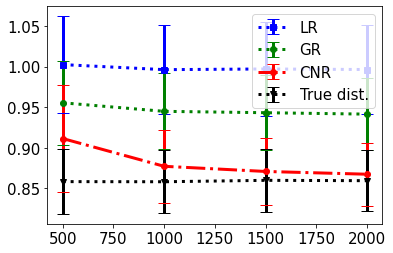}
\endminipage\hfill
\minipage{0.32\textwidth}%
\caption*{MR}
  \includegraphics[width=\linewidth]{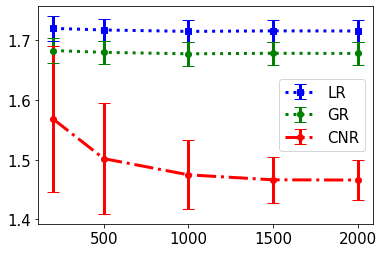}
\endminipage\hfill
\\
\minipage{0.32\textwidth}
  \includegraphics[width=\linewidth]{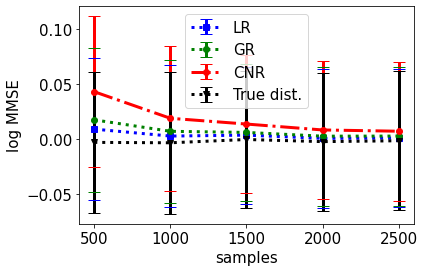}
\endminipage\hfill
\minipage{0.32\textwidth}
  \includegraphics[width=\linewidth]{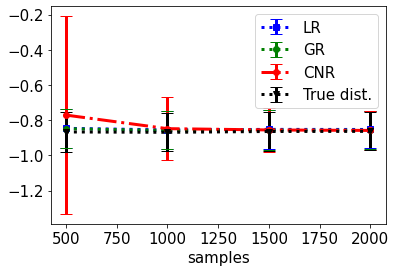}
\endminipage\hfill
\minipage{0.32\textwidth}%
  \includegraphics[width=\linewidth]{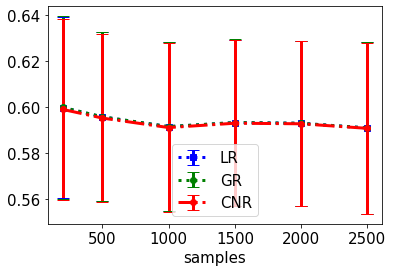}
\endminipage\hfill
\caption{NLL and MMSE  values (top and bottom rows, respectively) as a function of the number of training samples, given LR, CNR and MR synthetic data (left, middle and right columns, respectively).  
    }
    \label{minipage1}
\end{figure*}

\begin{figure}
    \centering
    \includegraphics[width=0.35\textwidth,height=3.1cm]{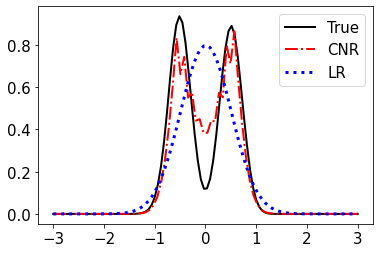}
    \caption{True MR and estimated CNR PDF's for one instance of $\x_i$.}
    \label{fig:MR}
\end{figure}

\begin{figure}
    \centering
    \includegraphics[width=0.35\textwidth,height=3.1cm]{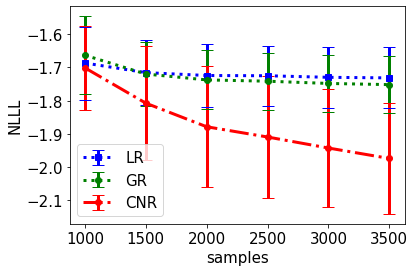}
    \caption{NLL reusults for household data.}
    \label{fig:household}
\end{figure}

\iffalse
\begin{figure}[!htb]
\minipage{0.35\textwidth}
  \includegraphics[width=\linewidth]{hist1.png}
\endminipage\hfill
\minipage{0.35\textwidth}
  \includegraphics[width=\linewidth]{household_likelihood.png}
\endminipage\hfill
\caption{Left: true MR PDF and estimated CNR  for one instance of $\x_i$. Right: NLL reusults for solar panel data.
    }
    \label{minipage2}
\end{figure}
\fi

We now demonstrate the efficacy of our approach in capturing
expressive conditional distributions. We compare our
CNR approach to the LR and GR baselines. Running CNR we use four bins, partitioned by the training data's $(0.3,0.5,0.7)$ quantiles. In case where synthetic data is CNR distributed we use true bins instead. In the synthetic simulations,
we also report a clairvoyant lower bound based on the true
unknown parameters. Since our main goal is to capture the
model and its posterior probabilities, we measure performance through the negative log likelihood (NLL) of an independent
test set. We also provide log MMSE results with respect to the prediction error of $y_i$,
where is the conditional expectation in Lemma 4, showing
asymptotically comparable results for all models. The reported
metrics are empirical means of 2,000 independent trials per
point, together with error bars of one standard deviation. For synthetic data simulations, the true parameters are drawn from random samples.

\textbf{Synthetic data.} We use data
generated from the LR and CNR regression models. The true
parameters are fixed constants and the features are standard
normal variables. Fig. \eqref{minipage1} provides NLL and log MMSE results as vs. the number of samples generated from LR and
CNR settings (left and middle columns). In terms of NLL, each
estimator performs best under its true model.
CNR requires
more samples to reach its competitors’ performance in their
specialized models. Nevertheless, CNR is significantly better
in terms of NLL when the true data is CNR distributed. In
terms of MMSE, LR typically exceeds in sufficiency. We also
consider a simple mixture of regressions (MR)  \cite{balakrishnan2017statistical}, where the $i$-th observation is $y_i=\pm \a^T \x_i+0.2\epsilon_i$, $\epsilon_i \sim \mathcal{N}(0,1)$, the +/- is randomly selected with probability of 1/2, leading to a bimodal PDF. The right column of Fig. 1 shows that
CNR beats its competitors in terms of NLL. To illustrate CNR’s
expressiveness, we plot the histogram of an estimated CNR for one instance of $\x_i$. Fig. \ref{fig:MR} shows that CNR captures MR bimodality.

\textbf{Real world data}. We  consider household electric power consumption\footnote{\url{https://archive.ics.uci.edu/ml/datasets/Individual+household+electric+power+consumption#}} time series, with the goal
of predicting future outcomes. We extract $n$ random sub-sequences of length $10$ and use the first $9$ variables to predict the last variable. At
each trial we assign 500 samples to the test set, and
the remaining samples to the training set. As previously,
we estimate the LR, GR and CNR parameters using the training set, and compute the NLL and MMSE of the test set. We preform
$2,000$ trials per $n$. As shown in Fig. \ref{fig:household}, the CNR results have better NLL values. In terms of MMSE, all models perform similarly. 

\section{Conclusion}
In this paper we introduced CNR, a novel nonparanormal approach for modeling flexible conditional posterior distributions. Despite being able to capture arbitrary distributions, CNR optimization objective is a convex one. We made the approach concrete using a dictionary of simple non-linear basis transformation and derived closed forms for the predictive mean in this setting. Finally, we demonstrated the merit of our approach on synthetic and real data.

One main issue left unresolved is the effect of training sample size on the domain of our CNR estimator, as we noticed in simulations the presence of invalid test set predictions. Thus, a possible future direction is a methodology development for specifying more general and fewer constraints that will define a larger domain.  
Another important issue is the choice of dictionary and bins that can affect model expressiveness and accuracy. 
All these are left for further investigation.

\clearpage
\newpage

\bibliography{ref}
\bibliographystyle{ieeetr}

\clearpage
\newpage

\section*{Supplemental: Convex Nonparanormal Regression}

Supplemaentary material includes the proof of lemma 4 and ADMM derivation.

\textbf{Proof of lemma  4}
We obtain the result by computing the expectation in each segment of the real line, partitioned by $I$.
The CNR density function $p_{\thet}(y|\x)$ in each segment is:
\begin{multline*}
\begin{cases}
 \frac{\alpha_0}{\sqrt{2\pi}}e^{-\frac{1}{2}(\mu+\alpha_0\(y-p_0\))^2} & y\leq p_0\\
  \frac{\alpha_{j+1}}{\sqrt{2\pi}}e^{-\frac{1}{2}\(\mu+
  \Delta_j\)
  +\alpha_{j+1}\(y-p_{j}\))^2} &  y\in(p_j, p_{j+1}],
  0\leq j\leq L-1\\
  \frac{\alpha_{L+1}}{\sqrt{2\pi}}e^{-\frac{1}{2}(\mu+
  \Delta_L
  +\alpha_{L+1}\(y-p_{L}\))^2} &  y>p_L
 \end{cases}
 %\label{pdf}
\end{multline*}
Considering the integral over each segment, it can be verified that: 
\begin{equation*}
    %\frac{\alpha_0}{\sqrt{2\pi}}\int_{-\infty}^{p_0}
    %ye^{-\frac{1}{2}(\mu+\alpha_0\(y-p_0\))^2}dy=
    \int_{-\infty}^{p_0}
    yp_{\thet}(y|\x)dy=
    %\\
    %\frac{\alpha_0}{\sqrt{2\pi}}\int_{-\infty}^{p_0}
    %\frac{1}{\alpha_0^2}\alpha_0\(\mu+\alpha_0\(y-p_0\)\)
    %e^{-\frac{1}{2}(\mu+\alph%a_0\(y-p_0\))^2}dy
    %+
    %\\
    %\frac{\alpha_0}{\sqrt{2\pi}}\int_{-\infty}^{p_0}
    %\(
    %-\frac{\mu}{\alpha_0}+p_0
    %\)
    %e^{-\frac{1}{2}(\mu+\alpha_0\(y-p_0\))^2}dy=
    %\\
    -\frac{e^{-\frac{1}{2}\mu^2}}{\sqrt{2\pi}\alpha_0} 
    +\(p_0-\frac{\mu}{\alpha_0}\) \Phi(\mu),
\end{equation*}
\begin{multline*}
    %\frac{\alpha_{L+1}}{\sqrt{2\pi}}\int^{\infty}_{p_L}
    %ye^{-\frac{1}{2}(\mu+\Delta_L+\alpha_{L+1}\(y-p_L\))^2}dy=
    \int^{\infty}_{p_L}
    y
    p_{\thet}(y|\x)
    dy=
    \\
    \frac{e^{-\frac{1}{2}\(\mu+\Delta_L\)^2}}{\sqrt{2\pi}\alpha_{L+1}}
    +\(p_L-\frac{\mu+\Delta_L}{\alpha_{L+1}}\) \(1-\Phi(\mu+\Delta_{L})\),
\end{multline*}

and for the inner segments, $j=1,...,L-1$:
\begin{multline*}
   %\frac{\alpha_{i+1}}{\sqrt{2\pi}}
%    \int^{p_{i+1}}_{p_i}
%    ye^{-\frac{1}{2}
%    \(
%    \mu + \Delta_i +
%    \alpha_{i+1}\(y-p_i\)
%    \)^2
%    }dy=
\int^{p_{j+1}}_{p_j}
    yp_{\thet}(y|\x)
    dy=
%\\
    %yp_{\thet}(y|\x)dy=
%    \frac{\alpha_{i+1}}{\sqrt{2\pi}}\int_{p_i}^{p_{i+1}}
%    \frac{1}{\alpha_{i+1}^2}\alpha_{i+1}\(
%    \mu+\Delta_i
%    +\alpha_{i+1}\(y-p_{i}\)\)
%    e^{-\frac{1}{2}\(\mu+\Delta_i
%    +\alpha_{i+1}\(y-p_{i}\)\)^2}dy
%    +
%    \\
%    \frac{\alpha_{i+1}}{\sqrt{2\pi}}\int_{p_i}^{p_{i+1}}
%    \(
%    -\frac{\mu+\Delta_i}{\alpha_{i+1}}+p_i
%    \)
%    e^{-\frac{1}{2}(\mu+\Delta_i+\alpha_{i+1}\(y-p_i\))^2}dy=
    \frac{e^{-\frac{1}{2}\(\mu+\Delta_j\)^2}
    -e^{-\frac{1}{2}\(\mu+\Delta_{j+1}\)^2}}{\sqrt{2\pi}\alpha_{j+1}}
    \\
    +\(p_j-\frac{\mu+\Delta_j}{\alpha_{j+1}}\) \(\Phi(\mu+\Delta_{j+1})-\Phi(\mu+\Delta_{j})\).
\end{multline*}
%the last equality follows from 
%\begin{multline*}
%    \int^{y_0}_{p_i} 
%    \frac{\alpha_{i+1}}{\sqrt{2\pi}}e^{-\frac{1}{2}(\mu+
%  \Delta_i
%  +\alpha_{i+1}\(y-p_{i}\))^2} dy
%    =\\
%    \Phi(\mu+
%  \Delta_i
 % +\alpha_{i+1}\(y_0-p_{i}\))% -
%    \Phi(\mu+
%  \Delta_i)
%  ,\\
%   y_0 \in (p_i p_{i+1}], i\leq L-1
%   \quad\text{or}
% \quad   y_0>p_{L} 
%   .
%\end{multline*}
%Lastly, for right-most segment it holds that:
The required result is obtained by summing up all these terms.

\textbf{ADMM derivation}
We first derive a matrix form expression for the likelihood $\sum_i g_{\thet}^2(y_i;\x_i)-2\log\(
g'_{\bm{\theta}}(y_i;\x_i)
\)$.  Define:
 \begin{equation*}
     \H = 
     \begin{bsmallmatrix}
     1 & h_{0}(y_1) & \hdots &  h_{L+1}(y_1) \\
     \vdots & \vdots & \vdots & \vdots \\ 
     1 & h_{0}(y_n) & \hdots &  h_{L+1}(y_n) \\
     \end{bsmallmatrix},
     \H' = 
     \begin{bsmallmatrix}
     0 & h^{'}_{0}(y_1) & \hdots &  h^{'}_{L+1}(y_1) \\
     \vdots & \vdots & \vdots & \vdots \\ 
     0 & h^{'}_{0}(y_n) & \hdots &  h^{'}_{L+1}(y_n) \\
     \end{bsmallmatrix}.
 \end{equation*}
Denote by $\h_i$ and $\h^{'}_i$ the $i$'th row of $\H$ and $\H'$, respectively.
 Then $g_{\thet}(y_i;\x_i)=\h_i^T \(\A\x_i +\boldsymbol{b}\)$ and 
 $g'_{\thet}(y_i;\x_i)={\h'_i}^T \(\A\x_i +\boldsymbol{b}\)$. 
 Thus, the likelihood 
 can be written as:
 \begin{equation*}
     \sum_i
     \( 
     \h_i^T \(
     \A\x_i +\boldsymbol{b}
     \)
     \)^2
     -2\text{log} 
     \(
     {\h'_i}^T \(\A\x_i +\boldsymbol{b}\)
     \).
 \end{equation*}

Denote the row-wise vectorized form of $\A$ by $\a$, such that:
\begin{multline*}
    \a=
    \[\A_{1:},..., \A_{(L+3):}\]
    = 
    \[A_{1,1},...,A_{1,k},...,
    A_{L+3,1},...,A_{L+3,k}
    \],
\end{multline*}
where $\A_{i:}$ is the $i$'th row of $\A$. 
We now convert the terms $\h^T_i(\A\x_i+\boldsymbol{b})$ and ${\h'}^T_i(\A\x_i+\boldsymbol{b})$ into a vector forms. It holds that:
$\h^T_i(\A\x_i+\boldsymbol{b})=
\sum\limits_{\ell=1}^{L+3}
    \[\h_{i}\]_{\ell} (\x_i^T \A_{\ell:}+b_{\ell} )$,
\iffalse    
\begin{equation*}
\h^T_i(\A\x_i+\boldsymbol{b})=
\sum\limits_{\ell=1}^{L+3}
    \[\h_{i}\]_{\ell} (\x_i^T \A_{\ell:}+b_{\ell} ),
\end{equation*}
\fi
which is equal to
$ \(\h_i^T \otimes \x_i^T\) \a 
    + \h_i^T \boldsymbol{b}
    = 
    \[\h_i^T \otimes \x_i^T 
    \quad 
    \h_i^T
    \] \w$,
\iffalse    
\begin{align*}
    \(\h_i^T \otimes \x_i^T\) \a 
    + \h_i^T \boldsymbol{b}
    = 
    \[\h_i^T \otimes \x_i^T 
    \quad 
    \h_i^T
    \] \w,
\end{align*}
\fi
where 
$ \w = \begin{bsmallmatrix}
\a \\
\boldsymbol{b}
\end{bsmallmatrix}$.
Similarly,
\begin{equation*}
    {\h'}^T_i(\A\x_i+\boldsymbol{b})
    =
    \[{\h'}_i^T \otimes \x_i^T 
    \quad 
    {\h^{'}_i}^T
    \] \w
    . 
\end{equation*}
Using the notation 
$ \p^T_i=\[\h_i^T \otimes \x_i^T 
    \quad 
    \h_i^T
    \]$
    and 
$\q^T_i=\[{\h'_i}^T \otimes \x_i^T 
    \quad 
    {\h^{'}_i}^T
    \]$,
the likelihood can now be written as:
\begin{equation*}
    \w^T \(\sum_i
    \p_i \p^T_i
    \) \w
    -2\sum_i \text{log}\(\q^T_i \w\),
\end{equation*}
which is also equivalent to:
\begin{equation*}
    \w^T \(\sum_i
    \p_i \p^T_i
    \) \w
    -2\sum_i \text{log}\(z_i\), \quad s.t. \quad \z=\Q\w,
\end{equation*}
where 
$ \Q=\begin{bsmallmatrix}
\q^T_1 \\ 
\vdots \\
\q^T_n
\end{bsmallmatrix}.$
The ADMM objective now becomes:
\begin{multline*}
    \w^T \(\sum_i
    \p_i \p^T_i
    \) \w 
    -2\sum_i \text{log}\(z_i\) 
    + \y^T (\Q\w-\z) + \frac{\rho}{2}
    \|\Q\w-\z\|_2^2.
\end{multline*}
We now derive the alternating minimization steps. First, with respect to $\w$ the objective is quadratic. Ignoring all terms free of $\w$, the objective is:
\begin{multline*}
    \w^T \(\sum_i
    \p_i \p^T_i
    \) \w 
    +\y^T \Q \w + 
    \frac{\rho}{2} \w^T \Q^T \Q\w 
    -\rho \z^T \Q\w=\\
    \w^T \(\sum_i
    \p_i \p^T_i
    +\frac{\rho}{2}\Q^T \Q 
    \) \w
    +
    \(
    \y^T\Q -\rho \z^T\Q
    \)\w,
\end{multline*}
whose minimizer is given by:
\begin{equation*}
    \w_{min}= -\frac{1}{2}
    \(\sum_i
    \p_i \p^T_i
    +\frac{\rho}{2}\Q^T \Q 
    \)^{-1}
    \Q^T 
    \(
    \y-\rho \z
    \).
\end{equation*}
Next, ignoring all terms free of $\w$ the objective becomes:
\begin{equation*}
    \mathcal{L}=
    -2\sum_i \text{log}\(z_i\) 
    -\y^T\z + \frac{\rho}{2}\z^T\z
    -\rho \w^T \Q^T \z.
\end{equation*}
We then have:
\begin{multline*}
    \frac{\partial \mathcal{L}}{\partial z_{\ell}} = 
    -\frac{2}{z_{\ell}} -y_{\ell} 
    +\rho z_{\ell} -\rho \q_{\ell}^T \w=0 \implies \\
      \rho z_{\ell}^2 
      -z_{\ell} \(\rho  \q_{\ell}^T\w
      +y_{\ell}\)
    -2
    =0.
\end{multline*}
There are two possible solutions to this equation:
\begin{equation*}
    \frac{\rho \q_{\ell}^T\w+y_{\ell} 
    \pm 
    \sqrt{\(\rho \q_{\ell}^T\w+y_{\ell}\)^2+8\rho}
    }
    {2\rho}.
\end{equation*}
Since $\rho>0$, the only positive solution is
\begin{equation*}
    z_{\ell}^*=
    \frac{\rho \q_{\ell}^T\w+y_{\ell} 
    +
    \sqrt{\(\rho \q_{\ell}^T\w+y_{\ell}\)^2+8\rho}
    }
    {2\rho}.
\end{equation*}

Together with the dual update, each ADMM iteration consists of three step as described in  Algorithm 1 in the main text.

\end{document}